\newtheorem{theorem}{Theorem}
\newtheorem{lemma}{Lemma}
\newcommand{\cut}[1]{{}}
\newcommand{\vA}{{\mathbf{A}}}
\newcommand{\cE}{{\mathcal{E}}}
\newcommand{\cG}{{\mathcal{G}}}
\newcommand{\cN}{{\mathcal{N}}}
\newcommand{\cV}{{\mathcal{V}}}
\newcommand{\RR}{\mathbb{R}}
\let\@@span\span
\def\sp@n{\@@span\omit\advance\@multicnt\m@ne}
\newcommand{\bc}{\begin{center}}
\newcommand{\ec}{\end{center}}
\newcommand{\bdm}{\begin{displaymath}}
\newcommand{\edm}{\end{displaymath}}
\newcommand{\beq}{\begin{equation}}
\newcommand{\eeq}{\end{equation}}
\newcommand{\bfl}{\begin{flushleft}}
\newcommand{\efl}{\end{flushleft}}
\newcommand{\bt}{\begin{tabbing}}
\newcommand{\et}{\end{tabbing}}
\newcommand{\beqn}{\begin{align}}
\newcommand{\eeqn}{\end{align}}
\newcommand{\beqs}{\begin{align*}} 
\newcommand{\eeqs}{\end{align*}}  
\begin{document}

\title{VISAGNN: Versatile Staleness-Aware Efficient Training on Large-Scale Graphs}

\author{Rui Xue}
\affiliation{%
  \institution{North Carolina State University}
  \city{Raleigh}
  \country{USA}
}

\begin{abstract}
Graph Neural Networks (GNNs) have shown exceptional success in graph representation learning and a wide range of real-world applications. However, scaling deeper GNNs poses challenges due to the neighbor explosion problem when training on large-scale graphs. To mitigate this, a promising class of GNN training algorithms utilizes historical embeddings to reduce computation and memory costs while preserving the expressiveness of the model. These methods leverage historical embeddings for out-of-batch nodes, effectively approximating full-batch training without losing any neighbor information—a limitation found in traditional sampling methods. However, the staleness of these historical embeddings often introduces significant bias, acting as a bottleneck that can adversely affect model performance. In this paper, we propose a novel \underline{V}ersat\underline{I}le \underline{S}taleness-\underline{A}ware GNN, named \textit{VISAGNN}, which dynamically and adaptively incorporates staleness criteria into the large-scale GNN training process. By embedding staleness into the message-passing mechanism, loss function, and historical embeddings during training, our approach enables the model to adaptively mitigate the negative effects of stale embeddings, thereby reducing estimation errors and enhancing downstream accuracy. Comprehensive experiments demonstrate the effectiveness of our method in overcoming the staleness issue of existing historical embedding techniques, showcasing its superior performance and efficiency on large-scale benchmarks, along with significantly faster convergence.
\end{abstract}

\maketitle

\section{Introduction}
\label{sec:intro}

Graph Neural Networks (GNNs) have proven to be highly effective tools for learning representations from graph-structured data~\citep{hamilton2020graph,ma2021deep}, excelling in tasks such as node classification, link prediction, and graph classification~\citep{kipf2016semi,gasteiger2018combining,velivckovic2017graph,wu2019simplifying, xue2023lazygnn, xue2023efficient}. They have also been successfully applied in real-world scenarios like recommendation systems, biological molecule modeling, and transportation networks~\citep{tang2020knowing, sankar2021graph,fout2017protein,wu2022graph, zhang2024linear}. However, the scalability of GNNs is challenged by their recursive message-passing process, which results in the neighborhood explosion problem. This issue arises because the number of neighbors involved in mini-batch computations grows exponentially with the number of GNN layers~\citep{hamilton2017inductive, chen2018fastgcn, han2023mlpinit}, making it difficult for deeper GNNs to capture long-range dependencies on large graphs. Such long-range information is known to enhance GNN performance~\citep{gasteiger2018predict, gu2020implicit, liu2020towards, chen2020simple, li2021training, ma2020unified, pan2020_unified, zhu2021interpreting, chen2020graph}, but the neighborhood explosion problem limits the ability of GNNs to handle large-scale graphs within the constraints of GPU memory and computational resources during training and inference. This bottleneck significantly hampers the expressive power of GNNs and their applicability to large-scale graphs.

Various approaches have been developed to enhance the scalability of GNNs, including sampling techniques~\citep{hamilton2017inductive,chen2018fastgcn,chiang2019cluster,Zeng2020GraphSAINT, xue2024haste}, pre- and post-computing strategies~\citep{wu2019simplifying, rossi2020sign, sun2021scalable, huang2020combining}, and distributed learning~\citep{chai2022distributed, shao2022distributed}. Among these, sampling methods are widely used to address the neighborhood explosion problem in large-scale GNNs due to their simplicity and promising results. However, sampling methods often discard information from unsampled neighbors during training, and because nodes in a graph are interconnected and cannot simply be treated as independent and identically distributed (\emph{iid}), this leads to estimation variance in embedding approximation and an inevitable loss of accurate graph information.

To address this issue, historical embedding methods have been proposed, such as VR-GCN~\citep{chen2017stochastic}, MVS-GCN~\citep{cong2020minimal}, GAS~\citep{fey2021gnnautoscale}, GraphFM~\citep{yu2022graphfm} and Refresh~\citep{huang2023refresh}. These methods use historical embeddings of unsampled neighbors as approximations of their true aggregated embeddings. During each training iteration, they store intermediate node embeddings at each GNN layer as historical embeddings, which are then utilized in subsequent iterations. This approach effectively mitigates the neighbor explosion problem and reduces the variance associated with sampling methods by preserving all neighbor information. The historical embeddings can be stored offline on CPU memory or disk, conserving GPU memory. These approaches avoid ignoring any nodes or edges, thereby reducing variance and maintaining the expressiveness of the backbone GNNs while achieving strong scalability and efficiency.

While using historical embeddings can provide several benefits, their quality is a crucial determinant of overall performance. The discrepancy between a true node embedding and its corresponding historical embedding, which we refer to as the staleness of the historical embeddings, becomes a critical factor since the historical embedding serves as an approximation of the true one. This phenomenon is particularly evident in large-scale datasets, as the update speed of historical embeddings lags far behind that of model parameters. Specifically, each node’s cache is refreshed only once per epoch when it serves as a target node, whereas model parameters are updated $\frac{N}{B}$ times, where $N$ and $B$ denote the number of nodes and batch size, respectively. The gap widens as $N$ increases or $B$ decreases. As a result, these historical embeddings become highly stale and exhibit significant discrepancies from the true embeddings. Consequently, historical embedding methods often suffer from substantial degradation in both prediction accuracy and convergence speed when compared to vanilla sampling methods like GraphSAGE~\citep{hamilton2017inductive}, which do not rely on historical embeddings. Thus, staleness becomes the primary bottleneck for these methods ~\cite{huang2023refresh}.

Motivated by our findings and analysis, effectively utilizing staleness to leverage fresh embeddings while minimizing the impact of stale embeddings has become a critical issue. Although some existing works attempt to reduce staleness by evicting or implicitly compensating for the negative impact of stale embeddings, they cannot fully eliminate staleness and often introduce additional bias. To address this limitation, we propose a \underline{V}ersat\underline{I}le \underline{S}taleness-\underline{A}ware GNN (\textit{VISAGNN}), which \textbf{explicitly} incorporates staleness criteria through three key components:
(1) \textbf{Dynamic Staleness Attention}: We introduce a novel staleness-based weighted message-passing mechanism that uses staleness scores as a metric to dynamically determine the importance of each node during message passing;
(2) \textbf{Staleness-aware Loss}: We design a regularization term based on staleness criterion to be included in the loss function, explicitly reducing the influence of staleness on the model; 
(3) \textbf{Staleness-Augmented Embeddings}: We offer a straightforward solution by directly injecting staleness into the node embeddings. These three innovative techniques effectively mitigate the staleness issues present in all related approaches. Our proposed framework is highly flexible, orthogonal, and compatible with various sampling methods, historical embedding techniques, and system-level optimization strategies. Comprehensive experiments demonstrate that it further enhances existing historical embedding methods by improving performance, accelerating convergence, and maintaining high efficiency.

\section{Related Work}
\label{sec:related}

In this section, we summarize related works on the scalability of large-scale GNNs with a focus on sampling methods.

\noindent \textbf{Sampling methods.} 
Sampling methods utilize mini-batch training strategies by selecting a subgraph as a small batch, reducing computation and memory requirements. These methods fall into three main categories: (1)\textit{Node-wise sampling} samples a fixed number of neighbors per hop, as seen in models like GraphSAGE~\citep{hamilton2017inductive}, PinSAGE~\citep{ying2018graph}, and GraphFM-IB ~\citep{yu2022graphfm}. However, because it involves dropping unsampled nodes and edges, it introduces bias and variance. Additionally, while it helps mitigate the neighbor explosion problem, it doesn't entirely solve it since the number of neighbors still grows exponentially. 
(2)\textit{Layer-wise sampling} addresses the neighbor explosion problem by fixing the number of sampled neighbors per layer. For example, FastGCN~\citep{chen2018fastgcn} treats message passing from an integral perspective, independently sampling nodes in each GNN layer using importance sampling. LADIES and ASGCN~\citep{zou2019layer, huang2018adaptive} incorporates inter-layer correlations during sampling process. However, the adjacency matrix generated by layer-wise sampling tends to be sparser than that of other methods, often leading to suboptimal performance.
(3) \textit{Subgraph sampling} samples subgraphs as mini-batches and performs message passing within them, mitigating the neighbor explosion problem. ClusterGCN~\citep{chiang2019cluster} achieves this by clustering the graph and forming each mini-batch from several clusters. GraphSaint~\citep{Zeng2020GraphSAINT} extends this with diverse samplers and importance sampling to reduce bias and variance. However, these methods can still suffer from high variance due to the ignored edges between subgraphs.

\noindent \textbf{Historical embedding methods.}
While sampling methods effectively alleviate the neighbor explosion problem, they often suffer from performance degradation due to the variance introduced by dropping nodes and edges. To address this issue, some approaches have started using historical embeddings as an approximation for the true embeddings obtained from full-batch computation. This allows them to avoid dropping any nodes or edges while still reducing memory costs by limiting the number of sampled neighbors. VR-GCN\citep{chen2017stochastic} was the first to propose using historical embeddings for out-of-batch nodes to reduce variance. MVS-GCN\citep{cong2020minimal} improved this approach with a one-shot sampling strategy, eliminating the need for nodes to recursively explore their neighborhoods in each layer. GNNAutoScale~\citep{fey2021gnnautoscale} restricts the receptive field to direct one-hop neighbors, enabling constant GPU memory consumption while still preserving all relevant neighbor information. LMC~\citep{shi2023lmc} considered backward propagation, retrieving discarded embeddings during backward passes, which improved performance and accelerated convergence.

Although these historical embedding approaches are promising due to their strong performance and scalability, they are limited by approximation errors caused by the staleness of the historical embeddings. This issue becomes more pronounced with large-scale datasets. To address this, GAS~\citep{fey2021gnnautoscale} mitigates staleness by using graph clustering to reduce inter-connectivity and applying regularization to limit parameter changes, thus reducing approximation errors. GraphFM-OB\citep{yu2022graphfm} compensates for staleness by leveraging feature momentum for in-batch nodes nd out-of-batch nodes. Despite these efforts, these methods only tackle the issue superficially, resulting in minimal performance improvements. Refresh~\citep{huang2023refresh} introduces a staleness score, which quantifies the degree of staleness, and avoids using stale embeddings to alleviate this issue. However, it results in the loss of some direct neighbor information, introducing significant bias.

\section{Methodology}
\label{sec:method}
In this section, we first mathematically formulate historical embedding methods and then theoretically demonstrate that staleness is a key factor in the effectiveness of these methods. Then, we present a novel VISAGNN, that dynamically and explicitly incorporates staleness into the training process from multiple sensory perspectives, utilizing the staleness criterion as a metric to prioritize fresher embeddings over stale ones. The overall workflow is shown in Figure~\ref{fig:alg}. Here, we first present the definition:

\noindent \textbf{Notations.}
A graph is represented by $\cG = (\cV, \cE)$ where $\cV = \{v_1, \dots, v_n\}$ is the set of $n$ nodes and $\mathcal{E} = \{e_1, \dots, e_m\}$ is the set of $m$ edges.
The graph structure of $\cG$ can be represented by an adjacency matrix $\vA \in \RR^{n\times n}$, where $\vA_{ij}>0$ when there exists an edge between node $v_i$ and $v_j$, and $\vA_{i,j}=0$ otherwise. The neighboring nodes of node $v$ is denoted by $\cN(v)$.
The symmetrically normalized graph Laplacian matrix is defined as $L=I-\hat{A}$ with $\hat{A}=D^{-1/2}AD^{-1/2}$ where $D$ is the degree matrix. We denote by $\bar{h}_i^{(l)}$ the historical embedding of node $i$ at layer $l$, and by $\tilde{h}_i^{(l)}$ the approximate embedding of the exact embedding $h_i^{(l)}$, computed using the historical embeddings $\bar{h}^{(l)}$, with the staleness $s_i$ representing the approximation error. Formally, we define the staleness at layer $l$ for node $i$ as
$s_i = \;\big\lVert \bar{h}_i^{(l)} - h_i^{(l)}\big\rVert,$
which quantifies the discrepancy between the historical embedding $\bar{h}_i^{(l)}$ and the true embedding $h_i^{(l)}$.

\subsection{Staleness of historical embedding methods}
Sampling is used to generate mini-batches for message passing to address the scalability challenge in large-scale graphs:
\begin{align}
h_i^{(l+1)} &= g_\theta^{(l+1)}\big(h_i^{l}, [h_j^{l}]_{j\in \cN(i)}\big) \approx g_\theta^{(l+1)}\big(h_i^l, [h_j^l]_{j \in \mathcal{N}(i) \cap B}\big)
\end{align}

\noindent Here, \( h_i^l \) represent the feature embedding of the in-batch node \( i \) at the \( l \)-th layer, and \( g_\theta^{(l+1)} \) denote the message-passing update function at the $l+1$-th layer with parameters \( \theta \). The set \( \mathcal{N}(i) \cap B \) refers to the sampled 1-hop neighborhood of node \( i \) in current batch $B$. However, the large variance arises because the out-of-batch neighbors $[h_j^{l}]_{j\in \mathcal{N}(i)\setminus B}$ are not considered during aggregation.
 
To address this issue, historical embedding methods utilize historical embeddings 
\([ \bar{h}_j^l ]_{j \in \mathcal{N}(i) \setminus B}\) to approximate the embeddings of out-of-batch neighbor nodes \([ h_j^l ]_{j \in \mathcal{N}(i) \setminus B}\) at each layer, providing an approximation of full-batch aggregation. The feature memory is then updated for future use, using only the in-batch node embeddings \(\bar{h}_i^{l+1} = h_i^{l+1}\). This process can be expressed as:

\begin{align}
h_i^{(l+1)} &= g_\theta^{(l+1)}\big(h_i^{l}, [h_j^{l}]_{j\in \cN(i)}\big)\\
&= g_\theta^{(l+1)}\big(h_i^{l}, \underbrace{[h_j^{l}]_{j\in \cN(i)\cap B}}_{\text{in-batch neighbors}}\cup\underbrace{[h_j^{l}]_{j\in \mathcal{N}(i)\setminus B}}_{\text{out-of-batch neighbors}}\big)\\
&\approx g_\theta^{(l+1)}\big(h_i^{l}, \underbrace{[h_j^{l}]_{j\in \cN(i)\cap B}}_{\text{in-batch neighbors}}\cup\underbrace{[\Bar{h}_j^{l}]_{j\in \mathcal{N}(i)\setminus B}\big)}_{\text{historical embeddings}},
\end{align}

\noindent While using historical embeddings as approximations helps retain information for out-of-batch nodes and ensures constant memory usage \citep{fey2021gnnautoscale}, large approximation errors in certain stale embeddings can significantly degrade model performance. To highlight this issue and motivate our approach, we first present a theoretical analysis showing that the approximation error of the final embeddings is upper bounded by the staleness. Our analysis adheres to the assumptions outlined in all related works.

\begin{theorem}[Embeddings Approximation Error] 
\label{thm:approximation}
Assuming a L-layers GNN \( g_\theta^{(l)}(h) \) with a Lipschitz constant \( \beta^{(l)} \) for each layer \( l = 1, \dots, L \), and \( \mathcal{N}(i) \) is the set of neighbor nodes of \( i \), \( \forall i \in V \). \( \|\bar{h}^{(l)} - h^{(l)}\| \) represents the distance between the historical embeddings and the true embeddings, which corresponds to the staleness. The approximation error of the final layer embeddings $\Tilde{h}^{(L)}_i$ is then upper bounded by:

\begin{equation*}
||\Tilde{h}^{(L)}_i - h^{(L)}_i||\leq \sum_{k=1}^{L}\big(\prod_{l=k+1}^{L}\beta^{(l)}  |\cN(i)|*||\Tilde{\hat{A}}_{i,}||*||\Bar{h}^{(k-1)}-h^{(k-1)}||\big).\\
\end{equation*}
\end{theorem}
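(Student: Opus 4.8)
The plan is to prove the bound by a layer-wise recursion on the per-node approximation error $\epsilon^{(l)}_i := \lVert \tilde{h}^{(l)}_i - h^{(l)}_i\rVert$, unrolled from the output layer $L$ back to the input layer. The base case is $\epsilon^{(0)}_i = 0$: the layer-$0$ embeddings are the raw node features, which carry no staleness, so the would-be $k=0$ contribution vanishes (or is harmlessly absorbed into the $k=1$ summand). The only structural assumption used is the one inherited from the prior analyses this paper builds on, namely that each update map $g_\theta^{(l)}$ is $\beta^{(l)}$-Lipschitz.

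First I would establish the one-step inequality. The exact update is $h^{(l+1)}_i = g_\theta^{(l+1)}\big(h^{(l)}_i, [h^{(l)}_j]_{j\in\cN(i)}\big)$, whereas the approximate update evaluates $g_\theta^{(l+1)}$ at $\tilde{h}^{(l)}_i$ together with the neighbor embeddings in which every out-of-batch node $j$ has been replaced by its cached $\bar{h}^{(l)}_j$. Applying the Lipschitz property and the triangle inequality to the difference of these two evaluations gives
\begin{align*}
\epsilon^{(l+1)}_i \;\le\; \beta^{(l+1)}\Big(\epsilon^{(l)}_i \;+\; \sum_{j\in\cN(i)} \tilde{\hat{A}}_{ij}\,\lVert \bar{h}^{(l)}_j - h^{(l)}_j\rVert\Big),
\end{align*}
where the first term bounds the perturbation of the center embedding and the sum bounds the perturbation propagated through the symmetrically normalized aggregation (in-batch neighbors contribute at most their own error, which is again dominated by the staleness up to the same bookkeeping). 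Bounding the number of nonzero summands by $|\cN(i)|$, each weight $\tilde{\hat{A}}_{ij}$ by the row norm $\lVert \tilde{\hat{A}}_{i,}\rVert$, and each per-node staleness by $\lVert \bar{h}^{(l)} - h^{(l)}\rVert$, the aggregation term collapses to $|\cN(i)|\cdot\lVert \tilde{\hat{A}}_{i,}\rVert\cdot\lVert \bar{h}^{(l)} - h^{(l)}\rVert$.

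Second I would unroll this recursion. Iterating from $l+1=L$ down to $l=0$ and using $\epsilon^{(0)}_i=0$, the center-node term telescopes through the product of the Lipschitz constants of the layers above, while the staleness injected at each layer $k$ is carried upward by the factor $\prod_{l=k+1}^{L}\beta^{(l)}$ and picks up the aggregation factor $|\cN(i)|\cdot\lVert \tilde{\hat{A}}_{i,}\rVert$ exactly once, namely at the layer where it first enters. Summing these contributions over $k=1,\dots,L$ produces precisely the claimed inequality (with the empty product $\prod_{l=L+1}^{L}\beta^{(l)}=1$ for the top-layer term).

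The step I expect to be the main obstacle is making the telescoping fully rigorous while retaining the node-specific constant $|\cN(i)|\cdot\lVert \tilde{\hat{A}}_{i,}\rVert$: in the one-step inequality the error of node $i$ at layer $l+1$ genuinely depends on the errors $\epsilon^{(l)}_j$ of its neighbors as well, so a careful argument either tracks a uniform bound $\max_i \epsilon^{(l)}_i$ (absorbing the aggregation constant into a global quantity) or adopts a mild regularity assumption that lets the recursion close on node $i$ alone — the route typically taken in this line of work. The remaining care is in the in-batch/out-of-batch bookkeeping, so that every neighbor contribution is correctly dominated by a staleness term; this part is routine once the decomposition above is fixed.
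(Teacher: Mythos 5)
Your proposal is correct in substance and lands on the same final structure as the paper: per-layer staleness contributions $\lVert \bar{h}^{(k-1)}-h^{(k-1)}\rVert$, each bounded through the aggregation by $|\cN(i)|\cdot\lVert\tilde{\hat{A}}_{i,}\rVert$ and amplified by the product $\prod_{l=k+1}^{L}\beta^{(l)}$ of the Lipschitz constants of the layers above. The mechanics differ slightly, and the difference matters exactly at the point you flag as your main obstacle. The paper does not run a per-node error recursion; it writes the two models as compositions $\tilde{g}_\theta^{(L)}\circ\cdots\circ\tilde{g}_\theta^{(1)}$ and $g_\theta^{(L)}\circ\cdots\circ g_\theta^{(1)}$ and telescopes by swapping one layer at a time, so that in the $k$-th term all layers below $k$ are the \emph{exact} maps $g_\theta^{(k-1)},\dots,g_\theta^{(1)}$. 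Consequently the inner single-layer difference is $g_\theta^{(k)}(h_i^{(k-1)},\bar{h}^{(k-1)})-g_\theta^{(k)}(h_i^{(k-1)},h^{(k-1)})$, which involves only the historical-versus-true neighbor embeddings — no accumulated approximation error of the center node or of in-batch neighbors ever enters this term; all accumulation is carried by the Lipschitz constant of the outer composition. This is precisely how the paper sidesteps the neighbor-coupling issue in your one-step inequality $\epsilon^{(l+1)}_i\le\beta^{(l+1)}\bigl(\epsilon^{(l)}_i+\sum_{j}\tilde{\hat{A}}_{ij}\lVert\bar{h}^{(l)}_j-h^{(l)}_j\rVert\bigr)$, where closing the recursion on node $i$ alone would indeed require a uniform bound over neighbors or an extra regularity assumption. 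Your route buys a more explicit, operationally interpretable per-layer picture; the paper's hybrid decomposition buys a cleaner closure of the bound — though note it shares with your sketch the same looseness about per-node versus whole-graph quantities (the perturbation at layer $k$ affects all nodes, while the final constants $|\cN(i)|$ and $\lVert\tilde{\hat{A}}_{i,}\rVert$ are node-specific), so neither version is fully rigorous on that point.
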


\noindent The detailed proof is provided in the Appendix \ref{app:proof1}. From the above theorem, we can observe that the distance between the final layer's embeddings produced by historical embedding methods and full aggregations is bounded by a cumulative sum of the per-layer approximation error \( ||\Bar{h}^{(k-1)} - h^{(k-1)}|| \). To prevent the accumulation of staleness across layers from having a significantly negative impact on the quality of the final embeddings, reducing the impact of staleness at each layer becomes a crucial issue. 

From Theorem \ref{thm:approximation}, $||\Bar{h}^{(k-1)}-h^{(k-1)}||$ directly measures the staleness which is the distance between historical embeddings and true embeddings. However, it is impractical to recompute the true embedding $h^{(k-1)}$ at every iteration due to the significantly higher computational overhead involved. Hence, inspired by existing work \citep{huang2023refresh}, we adopt two indicators to represent the staleness criterion $s_i$: the persistence time $T_i$ and the gradient norm $||\nabla L_{\theta}(h_i)||$, both of which have been shown to be effective. We cache these two indicators from each layer along with the corresponding historical embeddings for use in our training framework. We detail these two designs as follows:

\begin{itemize} [leftmargin=*, itemsep=0pt, topsep=0pt]
    \item The persistence $T_i$ for a specific node \(i\) measures how many training iterations the historical embedding remains unchanged before being updated again. Since a specific node is sampled as a target only once per epoch, and historical embedding methods update the cached embeddings of target nodes only at every iteration, the embedding of that node remains unchanged for the rest of the iterations. However, the model parameters continue to update throughout all training iterations. \textbf{This is the the root cause of staleness.}  $T_i$ reflects the gap between the update frequencies of the historical embeddings and the model parameters, directly capturing staleness in a straightforward manner. A high persistence value indicates that the historical embedding has not been updated recently, leading to stronger feature staleness. 

    \item The norm of the gradient metric $||\nabla L_{\theta}(h_i)||$ reflects the extent of changes in the model parameters, which can also indicate the staleness. A small gradient magnitude suggests that the model parameters are not changing significantly, leading to stable node embeddings throughout the training iterations. Consequently, the approximation error between the historical embeddings and the exact embedding is likely to be small, resulting in minimal staleness.

\end{itemize}

\begin{figure*}[t]
    \begin{minipage}[c]{0.7\textwidth}
    \centering
    \includegraphics[width=1.0\linewidth]{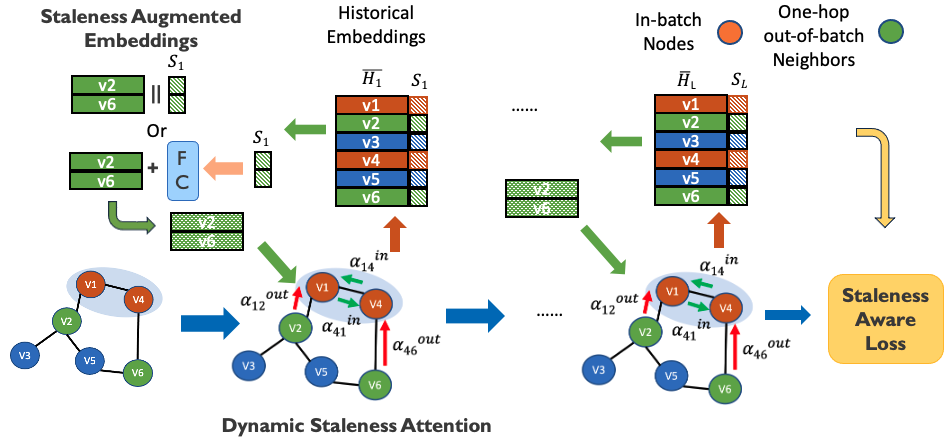}
    \end{minipage}\hfill
    \begin{minipage}[c]{0.25\textwidth}
    \caption{Three key designs in VISAGNN. (1) \textbf{Augmented Embeddings}: VISAGNN offers two ways to integrate staleness  criterion into historical embeddings. (2) \textbf{Dynamic Staleness Attention}: VISAGNN performs weighted message passing based on both feature embeddings and staleness  criterion. (3) \textbf{Staleness-aware loss}: A regularization term based on staleness is incorporated into the loss function in VISAGNN.
    }\label{fig:alg}
    \end{minipage}
\end{figure*}

\subsection{Dynamic Staleness Attention}
As introduced in Sec. \ref{sec:intro}, staleness becomes a bottleneck for existing historical embedding methods. While existing works like Refresh \citep{huang2023refresh} utilize staleness criteria as thresholds to evict embeddings that have not been recently updated or are unstable, simply discarding these embeddings based on staleness can introduce significant bias. This approach also makes the model overly sensitive to the fixed staleness threshold, as the embeddings of any nodes with staleness exceeding the threshold are discarded, even though their degrees of staleness may vary. Consequently, this motivates us to seek a better scheme that preserves neighbors but \textbf{dynamically re-weights their messages} according to the current degree of staleness and graph structure.

We propose a staleness-aware attention mechanism by incorporating the staleness criterion into the graph attention formulation. This mechanism dynamically adjusts attention coefficients based on the node’s current features, staleness, graph structure, and training progress. The attention coefficients for in-batch neighbors \(\alpha_{ij}^{\text{in}}\) and out-of-batch neighbors \(\alpha_{ij}^{\text{out}}\) between node \(i\) and node \(j\) at epoch \(t\) are formulated as follows. We omit the layer number $L$ in $\alpha$ for simplicity.

\noindent where $\mathbf{W}$ is the weight matrix applied for each nodes for feature transformation, $\mathbf{a}$ is a learnable weight vector used to compute the attention score between two nodes, similar as GAT. The operator $\parallel$ denotes concatenation. The term $s_j$ represents the staleness criterion of node $j$, reflecting how outdated the embedding of node $j$ is. The function $\sigma$ represents the nonlinear function, we specifically utilize sigmoid function in this paper, defined as $f(x) = \frac{1}{1+e^{-x}}$. The value $c_j$ is a centrality measure for node $j$, while $c_{avg}$ denotes the average centrality measure across the graph. In this paper, we use node degree as centrality metric to evaluate the importance of each node. The time-dependent coefficient $\gamma(t) = \frac{\boldsymbol{\beta}}{t}$, where $t$ is the current epoch, $\boldsymbol{\beta}$ is a learnable scaling factor that controls how quickly \(\gamma(t)\) decreases with the training process for each node. It modulates the impact of the staleness on attention during the training process. $\phi$ is a non-linear activation function. Note that $\boldsymbol{\alpha_{ij}^{\text{in}}}(t)$ degenerates into the traditional attention scores in GAT when staleness equals 0, which aligns with our intuition.

\vspace{8mm}
\begin{strip}
  \makebox[\textwidth]{
    \begin{minipage}{0.9\textwidth}
      \begin{equation}
        \boldsymbol{\alpha_{ij}^{\text{out}}}(t) = \frac{\exp\left(\text{LeakyReLU}\left(\mathbf{a}^T \left[\mathbf{W}h_i \parallel \mathbf{W}\Bar{h}_j\right] \right) - \boldsymbol{\gamma}(t) \cdot s_j \cdot \sigma(c_j - c_{\text{avg}})\right)}{\sum_{k \in \mathcal{N}(i) \setminus B} \exp\left(\text{LeakyReLU}\left(\mathbf{a}^T \left[\mathbf{W}h_i \parallel \mathbf{W}\Bar{h}_k\right] \right) - \boldsymbol{\gamma}(t) \cdot  s_k \cdot \sigma(c_k - c_{\text{avg}})\right)}
        \label{eqn:attn}
      \end{equation}
      \begin{equation}
        \boldsymbol{\alpha_{ij}^{\text{in}}}(t) = \left. \boldsymbol{\alpha_{ij}^{\text{out}}}(t) \right|_{s_j, s_k = 0, \Bar{h} = \Tilde{h}}
      \end{equation}
      \begin{equation}
        \Tilde{h}^{(L)}_i = \phi\left( \sum_{j\in \cN(i)\cap B} \boldsymbol{\alpha_{ij}}^{L-1,\text{in}} \mathbf{W} h^{(L-1)}_j, \sum_{j\in \mathcal{N}(i)\setminus B} \boldsymbol{\alpha_{ij}}^{L-1,\text{out}} \mathbf{W} \Bar{h}^{(L-1)}_j \right)
      \end{equation}
    \end{minipage}
  }
  \vspace{-2mm}
\end{strip}

The core of our design revolves around the term $- \gamma(t) * s_j * \sigma(c_j - c_{avg})$, which consists of three components: 

\noindent ~\textbf{(1) Staleness Criterion:} $s_j$ represents the staleness of each node embedding, which is the key for achieving staleness-aware attention. In our implementation, we choose to use the gradient criterion $||\nabla L_{\theta}(h_i)||$, which has shown to be more effective and stable than the persistence $T_i$ in our experiment. The gradients at any layer are obtained from backward propagation when the corresponding node was included into the computation graph previously.

\noindent ~\textbf{(2) Centrality:} After considering the impact of feature embeddings, centrality $c$ is introduced to incorporate the graph structure. We use node degree as centrality $c$ in this work. The motivation is that if a stale node is important, the negative effects caused by staleness will be amplified. Specifically, when the degree of a node is high and the staleness is also high, this term significantly penalizes and reduces the attention coefficient to mitigate the impact of staleness, as these stale embeddings are propagated through many neighboring nodes. Conversely, if the staleness is low, the node’s embedding is fresh and will not cause significant negative effects, allowing it to be effectively utilized. When the degree is low, these nodes are less critical to the final representation, so staleness may have a smaller impact. Furthermore, we choose to use relative centrality by subtracting the average node degree of the graph from each node’s degree, \(c_j - c_{\text{avg}}\), to prevent the issue that graphs with dense connections naturally have high node degrees. We then use the sigmoid function to further reduce the scale impact.

\noindent ~\textbf{(3) Decay Coefficient:} We also introduce a function $\gamma(t)$ related to the training process as a coefficient for the staleness term. The reason for this is that as training progresses, the model parameters gradually converge, leading to minimal updates of the embeddings in the final few epochs. Therefore, the influence of staleness should not play a significant role when calculating the attention score. Although there are many feasible designs, we directly used $\frac{\boldsymbol{\beta}}{t}$ for the sake of simplicity, where $\boldsymbol{\beta}$ is a learnable parameter.

\subsection{Staleness-aware Loss}
While the preceding components mitigate the negative impact of staleness in the forward pass, their influence on how stale embeddings propagate during optimization is relatively limited. To ensure the model consistently prioritizes fresh information, we introduce an explicit staleness criterion into the loss function, guiding optimization to control staleness in the backward pass. Specifically, we incorporate staleness into the optimization process as a regularization term to more effectively mitigate its effects. However, the gradient criterion $||\nabla L_{\theta}(h_i)||$ for staleness is not feasible to use since the loss has not yet been computed. From Theorem \ref{thm:approximation}, we find that the final representation contains the accumulated staleness from all layers, allowing it to effectively represent staleness. Hence, we choose to utilize the feature embeddings of in-batch nodes at last layer between two consecutive epochs for our design. The staleness-aware regularization term is defined as follows:
\begin{equation}
\mathcal{L}_{\text{stale}} = \sum_{i\in B } ||h_{i,k}^{(L)} - h_{i,k-1}^{(L)} ||^2
\end{equation}
\noindent where $h_{i,k}^{(L)}$ represents the feature embedding of node \( i \) at the final layer $L$ during epoch $k$. This design is based on the observation that as training progresses, model parameters tend to converge, resulting in smaller gradient values and fewer updates to the embeddings in later epochs. Consequently, the difference between final representations from consecutive epochs becomes progressively smaller, particularly after the model has been trained for several epochs. This aligns with our earlier conclusion that the influence of staleness diminishes as training progresses. Another advantage of this design is that it does not introduce any additional computational overhead.

By jointly optimizing both the downstream tasks and the staleness issue, the gradient also becomes staleness-aware, which better mitigates the negative effects of staleness on the model's performance. For the sake of simplicity, we define the overall training loss as follows:
\begin{equation}
\mathcal{L}(\theta) = \mathcal{L}_{\text{task}}(\theta) + \lambda \cdot\mathcal{L}_{\text{stale}}(\theta)
\label{eqn:loss}
\end{equation}

\noindent where $\lambda$ is a hyperparameter that controls the degree to which staleness affects loss and gradient computation. $\mathcal{L}_{\text{task}}$ is the task-specific loss, such as cross-entropy in node classification.

We provide a theorem that guarantees the convergence of VISAGNN when incorporating dynamic staleness attention along with the staleness-aware loss, as stated below:

\begin{theorem}
Consider VISAGNN using the staleness attention from Eqn.~\ref{eqn:attn} to minimize the loss function $L(\theta)$ defined in Eqn.~\ref{eqn:loss}, which is $L$-smooth with respect to parameters $\theta$. Suppose the stochastic gradient $g_t$ satisfies $\mathbb E[g_t|\theta_t]=\nabla L(\theta_t)$ and $\text{Var}[g_t|\theta_t]\le\sigma^2$, $L^* = \min_\theta L(\theta)$ and learning rate $\eta < \frac{2}{L}$. The algorithm converges and satisfies:
\begin{equation}
\frac{1}{T}\sum_{t=0}^{T-1} \mathbb{E}[\|\nabla L(\theta_t)\|^2] \leq \frac{2(L(\theta_0) - L^*)}{\eta T(2-\eta L)} + \frac{\eta L \sigma^2}{2-\eta L}
\end{equation}
\end{theorem}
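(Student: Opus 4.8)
The plan is to treat this as the textbook convergence analysis of stochastic gradient descent on a nonconvex, $L$-smooth objective: the dynamic staleness attention and the staleness-aware regularizer only reshape the map $\theta\mapsto L(\theta)$ and the stochasticity of $g_t$, and both effects are already subsumed by the stated hypotheses ($L$-smoothness, unbiasedness $\mathbb{E}[g_t\mid\theta_t]=\nabla L(\theta_t)$, and $\mathrm{Var}[g_t\mid\theta_t]\le\sigma^2$). Hence the staleness-specific content lives in the assumptions, and the remaining work is a single one-step descent estimate plus a telescoping sum.

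First I would write the descent lemma implied by $L$-smoothness,
\[
L(\theta_{t+1}) \le L(\theta_t) + \langle \nabla L(\theta_t),\, \theta_{t+1}-\theta_t\rangle + \tfrac{L}{2}\,\|\theta_{t+1}-\theta_t\|^2,
\]
and substitute the update $\theta_{t+1}=\theta_t-\eta g_t$. Conditioning on $\theta_t$ and using unbiasedness together with $\mathbb{E}[\|g_t\|^2\mid\theta_t]=\|\nabla L(\theta_t)\|^2+\mathrm{Var}[g_t\mid\theta_t]\le\|\nabla L(\theta_t)\|^2+\sigma^2$ gives
\[
\mathbb{E}[L(\theta_{t+1})\mid\theta_t] \le L(\theta_t) - \eta\Big(1-\tfrac{\eta L}{2}\Big)\|\nabla L(\theta_t)\|^2 + \tfrac{\eta^2 L\sigma^2}{2}.
\]
The hypothesis $\eta<2/L$ is precisely what makes the coefficient $\eta(1-\eta L/2)=\tfrac{\eta(2-\eta L)}{2}$ strictly positive, so this is a genuine per-step decrease modulo the $O(\sigma^2)$ variance floor.

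Next I would isolate $\|\nabla L(\theta_t)\|^2$, take total expectations, and telescope over $t=0,\dots,T-1$: the left-hand side becomes $\tfrac{\eta(2-\eta L)}{2}\sum_{t=0}^{T-1}\mathbb{E}[\|\nabla L(\theta_t)\|^2]$ and the right-hand side collapses to $L(\theta_0)-\mathbb{E}[L(\theta_T)]+\tfrac{T\eta^2 L\sigma^2}{2}\le L(\theta_0)-L^*+\tfrac{T\eta^2 L\sigma^2}{2}$ using $\mathbb{E}[L(\theta_T)]\ge L^*$; dividing by $T$ and by $\tfrac{\eta(2-\eta L)}{2}$ yields the stated inequality. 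The main obstacle is not the algebra but justifying the unbiasedness assumption in the presence of stale historical embeddings and the learnable attention coefficients: the historical-embedding surrogate injects a bias that Theorem~\ref{thm:approximation} controls by the staleness $\|\bar h^{(k-1)}-h^{(k-1)}\|$. I would handle this either by (i) noting, as in the prior work whose assumptions we adopt, that this bias is folded into the effective smoothness/variance constants, or (ii) carrying an explicit bias term $b_t$ with $\|b_t\|\le\delta$ through the descent step, which leaves the structure intact and merely replaces the asymptotic floor $\tfrac{\eta L\sigma^2}{2-\eta L}$ by an analogous $O(\eta\sigma^2+\delta^2)$ term.
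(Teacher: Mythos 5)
Your core argument is exactly the paper's: the paper's proof of this theorem culminates in the same descent-lemma step $\mathbb{E}[L(\theta_{t+1})\mid\theta_t]\le L(\theta_t)-\eta(1-\tfrac{\eta L}{2})\|\nabla L(\theta_t)\|^2+\tfrac{\eta^2 L\sigma^2}{2}$, followed by the same rearrangement, telescoping over $t=0,\dots,T-1$, the bound $\mathbb{E}[L(\theta_T)]\ge L^*$, and division by $T$, so that half of your proposal is correct and essentially identical. The only difference in scope is that the paper does not treat $L$-smoothness purely as a hypothesis: it first spends the bulk of the proof deriving a per-layer Lipschitz constant $\beta^{(l)}$ for the staleness-aware aggregation (via a lemma bounding the Lipschitz constant of the attention weights by $2\|\mathbf{a}\|\|\mathbf{W}\|(1+\gamma_{\max}\sigma_{\max})$, with $\gamma_{\max}=\gamma(0)$) and then composes these with the regularizer to exhibit an explicit smoothness constant $L=(L_{\text{task}}+2\lambda)\prod_{l}\beta^{(l)}$. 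Since the theorem statement itself assumes $L$-smoothness, your shorter route is legitimate; what the paper's extra Lipschitz analysis buys is a concrete justification that the staleness-modified attention and the staleness loss do not break smoothness, and an explicit constant tying $L$ to the attention parameters, whereas your remark that the staleness effects are ``subsumed by the hypotheses'' leaves that unverified. Your closing discussion of a possible bias term $b_t$ is a reasonable caveat but goes beyond what the paper addresses: the paper simply takes unbiasedness as given, exactly as the theorem states.
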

\noindent The proof can be found in Appendix~\ref{app:proof2}.

\subsection{Staleness-Augmented Embeddings}

While the proposed attention and loss functions can make the model aware of staleness, they do not explicitly convey the actual staleness level of each embedding. The dynamic attention may down-weight stale information, and the loss function may penalize it, but neither provides a direct signal about how stale an embedding is. To address this, we inject a staleness channel directly into the historical embeddings. This explicit staleness encoding is integrated using one of two methods: \textit{Concatenation} or \textit{Summation}.

\noindent \textbf{Concatenation}: We treat staleness as an additional dimension of feature and concatenate it with the historical embeddings at each layer. To ensure that the impact of staleness is appropriately balanced, we first normalize the staleness criterion using a log normalization technique. This approach helps to mitigate the influence of imbalanced distributions, such as extremely high staleness criterion values, ensuring that stale embeddings do not dominate the feature representation. It also prevents staleness from being overly influential due to differences in scale when combined with the node features. The augmented embeddings can be represented as:
\begin{equation}
    \Bar{h_j}' = \text{Concat}\left(\Bar{h_j}, log(1 + s_j)\right)
\end{equation}

\noindent \textbf{Summation}: This approach differs from simple concatenation. We combine the staleness criterion with the node features through a non-linear transformation, allowing the model to learn a weighted combination of the node’s inherent features and its staleness, potentially capturing their interactions and enhancing the expressiveness of the learned node representations. Specifically, suppose \(\mathbf{W}_s\) is a learnable weight matrix, the transformation can be represented as:
\begin{equation}
    \Bar{h_j}' = \Bar{h_j} + \phi(\mathbf{W}_s \cdot s_j)
\end{equation}

Similarly, we use $||\nabla L_{\theta}(h_i)||$ as the staleness criterion $s_i$. This choice is based on our experiments, which indicate that extreme values in the persistence time $T_i$ can adversely affect aggregation, causing the model to overly focus on the staleness term. Consequently, this negatively affects convergence, especially on very large datasets. However, we still utilize $T_i$ as used in Refresh~\cite{huang2023refresh}: We set a dataset-dependent high-value threshold $G_{\text{thres}}$ and drop a small portion of nodes whose persistence times $T_i$ significantly exceed it in each training iteration to mitigate extreme staleness, while leaving the majority of nodes unaffected.

\subsection{Complexity Analysis}
\label{sec:complexity}
Despite incorporating staleness-aware mechanisms, VISAGNN retains computational complexity on par with that of standard GAT. This is because VISAGNN only performs local neighbor attention—not global self-attention, which usually incurs $\mathcal{O}(n^2)$ complexity—meaning attention is computed only over existing edges. Specifically, the complexity remains $\mathcal{O}(nFF' + mF')$, where $F$ and $F'$ are the input and output feature dimensions, $n$ and $m$ are the numbers of nodes and edges, respectively. The additional staleness terms $\gamma(t) \cdot s_j \cdot \sigma(c_j - c_{avg})$ only introduce $\mathcal{O}(1)$ scalar operations per edge, resulting in $\mathcal{O}(m)$ additional cost, which is negligible compared to the $F'$-dimensional vector operations.

\section{Experiments}
\label{sec:exp}

\subsection{Performance}
\textbf{Experimental setting.} 
We present a performance comparison against major baselines, including several classical GNN models such as GCN~\citep{kipf2016semi} and SGC~\citep{wu2019simplifying}, sampling based methods, such as GraphSAGE~\citep{hamilton2017inductive}, FastGCN~\citep{chen2018fastgcn}, LADIES~\citep{zou2019layer}, Cluster-GCN~\citep{chiang2019cluster}, and GraphSAINT~\citep{Zeng2020GraphSAINT}. Additionally, we include state-of-the-art historical embeddings methods such as VR-GCN~\citep{chen2017stochastic}, MVS-GCN~\citep{cong2020minimal}, GNNAutoScale (GAS)~\citep{fey2021gnnautoscale}, GraphFM~\citep{yu2022graphfm}, Refresh~\citep{huang2023refresh} and LMC~\citep{shi2023lmc}. For the last four models, we employ GAT as the GNN backbone to ensure a fair comparison with our proposed methods. We conduct experiments on five widely used large-scale graph datasets: REDDIT, OGBN-Arxiv, OGBN-Products, OGBN-Papers100M, and MAG240M~\citep{hu2020open}, which span different domains and vary in scale, with the last two datasets considered as large-scale benchmarks. We report the best performance each model can achieve in Table \ref{tab:baseline}, following the configurations provided in their respective papers and official repositories. We present the performance comparison under various batch sizes in Table \ref{tab:BS}, as batch size is a key factor influencing the extent of staleness, as discussed in Sec. \ref{sec:intro}. For cases where no performance metrics or specific hyperparameters were provided in the original or baseline papers~\citep{fey2021gnnautoscale}, we use “—” to remain consistent with the baseline papers. For results in Table 2, due to the significant computational overhead required for these two datasets, we perform our own hyperparameter fine-tuning only on the historical embedding methods. For SGC, we use the results reported on the ogb-leaderboard. GraphSAGE runs out of memory (OOM) on these two large-scale datasets. We denote the augmentation strategies of concatenation and summation introduced in Section \ref{sec:method} as VISAGNN-Cat and VISAGNN-Sum, respectively.
The performance results are reported in Table~\ref{tab:baseline} and ~\ref{tab:papers100m}.

VISAGNN's hyperparameters are tuned from the following search space: (1) learning rate: $\{0.01, 0.001, 0.0001\}$;  (2) weight decay: $\{0, 5e-4, 5e-5\}$; (3) dropout: $\{0.1, 0.3, 0.5, 0.7\}$; (4) propagation layers : $L \in \{1, 2, 3\}$; (5) MLP hidden units: $\{256, 512\}$; (6) $\lambda \in \{0.1, 0.3, 0.5, 0.8\}$.

\begin{table}
\caption{Accuracy comparison ($\%$) with major baselines. “—” indicates that no reported performance/ hyperparameters.}
\vspace{-2mm}
\label{tab:baseline}
\resizebox{1\linewidth}{!}{%
\begin{tabular}{lccccc}
\toprule
    &{\footnotesize{\textbf{\#\,nodes}}} & \footnotesize{230K} & \footnotesize{169K} & \footnotesize{2.4M} \\ [-0.1cm] &
    {\footnotesize{\textbf{\#\,edges}}} & \footnotesize{11.6M} & \footnotesize{1.2M} & \footnotesize{61.9M} \\ [-0.05cm] \multirow{2}{*}{\textbf{Method}} & \multirow{2}{*}{\textbf{GNNs}} & \multirow{2}{*}{\textsc{Reddit}} & \texttt{ogbn} & \texttt{ogbn}\\ 
    & & & \texttt{arxiv} & \texttt{products} \\
\midrule
\multirow{8}{*}{\textbf{Scalable}} & GraphSAGE    & 95.4$\pm$0.1  & 71.5$\pm$0.2 & 78.7$\pm$0.1 \\
& FastGCN    & 93.7$\pm$0.1  & --- & ---  \\
& LADIES    & 92.8$\pm$0.1  & --- & --- \\
& Cluster-GCN    & 96.6$\pm$0.1  & --- & 79.0$\pm$0.2  \\
& GraphSAINT    & \textbf{97.0$\pm$0.1}  & --- & 79.1$\pm$0.3  \\
& SGC    & 96.4$\pm$0.1  & --- & ---  \\
\hline
\multirow{3}{*}{\textbf{Full Batch}} 
& GCN    & 95.4$\pm$0.1  & 71.6$\pm$0.1 & OOM   \\
& GAT   & 95.7$\pm$0.1 & 71.5$\pm$0.2 &  OOM\\
& APPNP   & 96.1$\pm$0.2  & 71.8$\pm$0.1 & OOM\\
\hline
\multirow{6}{*}{\textbf{Historical}} 
& VR-GCN    & 94.1$\pm$0.2  & 71.5$\pm$0.1 & 76.3$\pm$0.3 \\
& MVS-GNN  & 94.9$\pm$0.1 & 71.6$\pm$0.1 & 76.9$\pm$0.1 \\
&\multirow{1}{*}{GAS}
 & 95.7$\pm$0.1 & 71.7$\pm$0.2 & 77.0$\pm$0.3 \\
&\multirow{1}{*}{GraphFM}
& 95.6$\pm$0.2 & 71.9$\pm$0.2 & 77.2$\pm$0.2 \\
&\multirow{1}{*}{Refresh}
& 95.4$\pm$0.2 & 70.4$\pm$0.2 & 78.7$\pm$0.2 \\
&\multirow{1}{*}{LMC}
& 96.2$\pm$0.1 & 72.2$\pm$0.1 &  77.5$\pm$0.3 \\
\hline
\multirow{2}{*}{\textbf{Ours}} & \multirow{1}{*}{\textbf{VISAGNN-Cat}} & 96.5$\pm$0.2 & 73.0$\pm$0.1 & 79.9$\pm$0.3 \\
&\multirow{1}{*}{\textbf{VISAGNN-Sum}} & 96.6$\pm$0.2 & \textbf{73.2$\pm$0.2} & \textbf{80.2$\pm$0.2} \\
\bottomrule
\end{tabular}
}
\vspace{-5mm}
\end{table} 

\textbf{Performance analysis.} 
From the results of the performance comparison, we can draw the following observations:

\noindent ~$\bullet$ \textbf{VISAGNN is specifically designed to address the staleness issues present in existing models, delivering robust and consistently better solutions for large-scale datasets across both sampling and historical embedding methods, while also accelerating convergence—rather than merely achieving large performance gains on a few datasets.} Notably, staleness becomes a significant problem only on large-scale datasets. Consequently, our model exhibits \textbf{substantially greater improvements on datasets such as ogbn-products, ogbn-papers100M, and MAG240M}, while achieving relatively slight improvements on smaller datasets. This clearly demonstrates our model's strong capability to address the staleness issue.

\noindent ~$\bullet$ None of the existing historical embedding methods consistently outperform classical models on large-scale datasets such as ogbn-products, ogbn-papers100m and MAG240M, and they only surpass other scalable methods by a small margin on other datasets. For example, SGC performs almost the best among these baselines on two large scale datasets, ogbn-papers100m and MAG240M without relying on any complex model architecture design. This is due to the slower update of historical embeddings compared to model parameters, especially given the large number of batches in a single training epoch, highlighting staleness as a significant bottleneck for all historical embedding techniques. It is also worth noting that while Refresh performs well on large-scale datasets, it falls significantly behind other baselines when staleness is not dominant (ogbn-arxiv). This is because it simply evicts some important neighbors, which can potentially introduce significant bias, reinforcing our claim made in Section \ref{sec:related}.

\noindent ~$\bullet$ On large-scale datasets, the proposed VISAGNN outperforms all baselines on ogbn-arxiv, ogbn-products, ogbn-papers100M, and MAG-240M, particularly compared to state-of-the-art historical embedding methods (e.g., \textbf{+2.1\% on ogbn-papers100M and +2.4\% on MAG-240M}). Notably, VISAGNN shows substantial improvements on large scale datasets, highlighting the necessity and significance of the staleness-aware techniques we introduced, especially under conditions of increased staleness. Furthermore, VISAGNN-Sum surpasses VISAGNN-Cat, indicating that using a learnable fully connected layer is more effective for integrating staleness information into node embeddings, resulting in improved final representations.

\noindent ~$\bullet$ The strategies we proposed in VISAGNN can be integrated with various baselines. For instance, in LMC, historical gradients also encounter the issue of staleness, which dynamic attention can help alleviate during gradient message passing. This advantage underscores the flexibility and adaptability of our model.

\begin{table}
\centering
\renewcommand{\arraystretch}{1.2}
\caption{Prediction accuracy (\%) comparison with other baselines on \textbf{ogbn-papers100M} and MAG240M}
\vspace{-2mm}
\label{tab:papers100m}
\resizebox{\linewidth}{!}{%
\begin{tabular}{l|ccccccc}
\toprule
\textbf{Method} & \textbf{Sage} & \textbf{SGC} & \textbf{GAS} & \textbf{FM} & \textbf{Refresh} & \textbf{LMC}  & \textbf{VISAGNN}\\
\midrule
papers100M & OOM & 63.3 & 57.5 & 58.6 & 65.4 & 61.3 & \textbf{67.5}\\
MAG240M & OOM & 65.3 & 61.2 & 62.3 & 64.8 & 62.5 & \textbf{67.2}\\
\bottomrule
\end{tabular}
}
\vspace{-5mm}
\end{table}

\begin{table*}[th]
\centering
\caption{Memory usage (MB) and running time (seconds) on arxiv and products.}
\vspace{-3mm}
\label{tab:efficiency}
\setlength{\tabcolsep}{2pt}
\resizebox{0.8\linewidth}{!}{%
\begin{tabular}{c|ccccc|ccccc} 
\toprule
\multirow{3}{*}{\textbf{Dataset}} & \multicolumn{5}{c}{\textsc{Memory (MB)}} & \multicolumn{5}{c}{\textsc{Time (s)}} \\
\cmidrule(lr){2-6} \cmidrule(lr){7-11}
& \multirow{2}{*}{\textbf{Sage}} & \multirow{2}{*}{\textbf{GAS}} & \multirow{2}{*}{\textbf{Refresh}} & \textbf{VISAGNN} & \textbf{VISAGNN} & \multirow{2}{*}{\textbf{SAGE}} & \multirow{2}{*}{\textbf{GAS}} & \multirow{2}{*}{\textbf{Refresh}} & \textbf{VISAGNN} & \textbf{VISAGNN}\\
& & & & \textbf{-Cat} & \textbf{-Sum} & & & &\textbf{-Cat} & \textbf{-Sum}\\
\midrule
ogbn-arxiv & 2997 & 767 & 791 & 813 & 869 & 21 & 40 & 49 & 22 & 26\\
\midrule
ogbn-products & OOM & 8886 & 8933 & 8982 & 9017 & N/A & 2522 & 2178 & 1303 & 1380\\
\bottomrule
\end{tabular}
}
\end{table*}

\begin{table}[th]
\centering
\renewcommand{\arraystretch}{1}
\caption{Impact of different components}
\vspace{-3mm}
\label{components}
\resizebox{0.85\linewidth}{!}{%
\begin{tabular}{l|cc}
\toprule
\textbf{Method} & \textbf{ogbn-arxiv} & \textbf{ogbn-products}\\
\midrule
VISAGNN w/o att & 72.0$\pm$0.3 & 77.8$\pm$0.2 \\
VISAGNN w/o loss & 72.6$\pm$0.3 & 79.2$\pm$0.1 \\
VISAGNN w/o emb & 72.9$\pm$0.2 & 79.8$\pm$0.2 \\
VISAGNN & 73.2$\pm$0.2 & 80.2$\pm$0.2 \\
\bottomrule
\end{tabular}
}
\vspace{-3mm} 
\end{table}

\subsection{Efficiency Analysis}
\label{sec:efficiency} 
In this section, we present an efficiency analysis—reporting memory usage and total runtime at each model’s best performance. We chose ogbn-arxiv and ogbn-products datasets because the baselines supply their optimal hyperparameters on them. We compare our method against one classical scalable GNN (GraphSAGE) and two historical embedding techniques (GAS and Refresh), as summarized in Table~\ref{tab:efficiency}. To ensure a fair comparison, we employed the official implementations for all baseline methods and kept the hyperparameters consistent. For GAS and Refresh, we used GAT as the GNN backbone since both methods also leverage attention mechanisms.

From the results, we observe that GraphSAGE still suffers from the neighbor explosion problem, leading to out-of-memory (OOM) errors on ogbn-products and significantly higher memory costs for ogbn-arxiv in our experiments. Refresh requires less running time on ogbn-products as it converges more quickly due to the eviction of stale embeddings. However, it takes longer to converge on ogbn-arxiv compared to other models. In contrast, VISAGNN maintains nearly the same memory usage as GAS and Refresh while accelerating the training process significantly (see Sec. \ref{sec:complexity} for complexity analysis). This improvement is attributed to VISAGNN’s ability to achieve the fastest convergence among all historical embedding baselines by reducing staleness, thereby requiring substantially fewer epochs to converge (see Sec. \ref{sec:convergence}). Moreover, while VISAGNN-Sum incurs slightly higher memory costs and running time than VISAGNN-Cat due to the inclusion of a fully connected layer, it demonstrates improved performance.

\subsection{Convergence Analysis}
\label{sec:convergence}
We provide a convergence analysis by comparing the test accuracy over time for baselines, including GAS, Refresh, and our proposed VISAGNN, on the ogbn-arxiv and ogbn-products datasets. The results in Figure \ref{fig:visa_arxiv} and \ref{fig:visa_products} (S stands for summation, C stands for concatenation) reveal that when staleness is not significant (as in the ogbn-arxiv case), Refresh performs poorly because it loses information from neighbors. However, when staleness is significant (as in the ogbn-products case), GAS's convergence is heavily affected by staleness. In contrast, our model achieves faster convergence and superior performance on both cases by effectively accounting for varying levels of staleness in the historical embeddings during training, as introduced in Sec. \ref{sec:method}. This advantage becomes especially clear on large datasets, where staleness tends to be more severe. Overall, these findings show that our algorithm not only improves performance but also accelerates convergence.

\begin{figure}[!ht]
  \begin{minipage}[t]{0.23\textwidth}
    \centering
    \hspace{-0.3in}
    \includegraphics[width=1\textwidth]{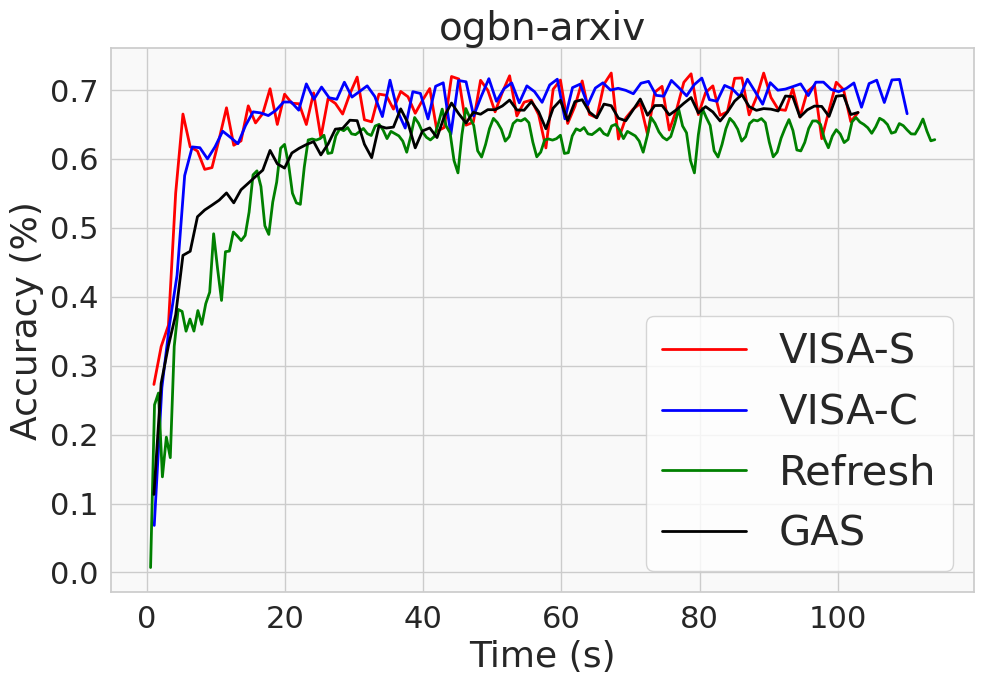}
    \caption{ogbn-arxiv}
    \label{fig:visa_arxiv}
  \end{minipage}
  \hspace{-5mm}
  \begin{minipage}[t]{0.23\textwidth}
    \centering
    \includegraphics[width=1\textwidth]{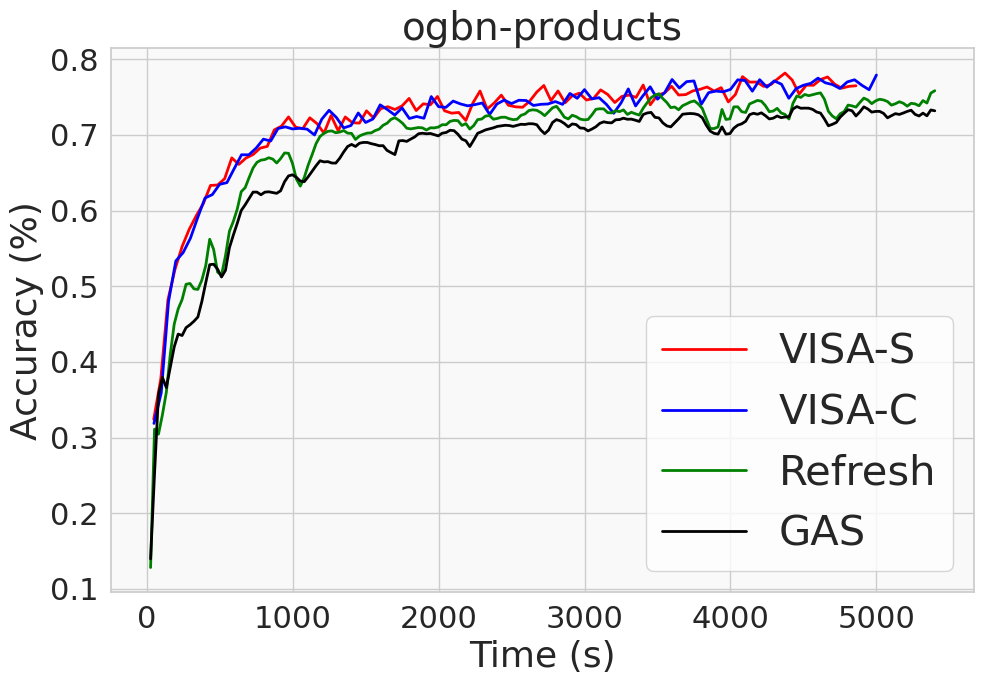}
    \caption{ogbn-products}
    \label{fig:visa_products}
  \end{minipage}
  \vspace{-3mm}
\end{figure}

\subsection{Ablation Study}
\subsubsection{Components}
Given the three novel techniques introduced in our VISAGNN, we conducted an ablation study on the ogbn-arxiv and ogbn-products datasets to identify which technique contributes the most to the final performance. For simplicity, we denote the dynamic attention, staleness-aware loss, and augmented embeddings as "att," "loss," and "emb," respectively. We use summation here for the augmented embedding. To ensure a fair comparison, all other hyperparameters are kept consistent, and we test various combinations of the three proposed strategies.

From Table~\ref{components}, we observe that the best performance occurs when all three techniques are applied. Specifically, the dynamic attention mechanism contributes the most, as it explicitly considers the staleness of each historical embedding during message passing and integrates this information into the training process, preventing overly stale embeddings from harming the final representation. Additionally, the proposed loss term enhances model performance by accounting for staleness in each training iteration, injecting this information into the gradient through backpropagation, thereby promoting staleness awareness in the model.

\begin{table}[h]
\centering
\caption{Accuracy ($\%$) for different batch sizes.}
\vspace{-3mm}
\label{tab:BS}
\resizebox{\linewidth}{!}{%
\begin{tabular}{l|c|c|ccccc}
\toprule
{{\textsc{Dataset}}} &{{\textsc{Clusters}}} & {{\textsc{BS}}} & {\textsc{GAS}} & {\textsc{FM}} & \textsc{LMC}& \textsc{\textbf{VISAGNN}} \\
\midrule
\multirow{3}{*}{\textbf{Products}} 
& \multirow{3}{*}{\textbf{150}} & 5   &  74.5 $\pm$ 0.6 & 74.8 $\pm$ 0.4 & 75.0 $\pm$ 0.4 & 77.1 $\pm$ 0.3  \\   
& & 10   & 75.6 $\pm$ 0.4 & 76.0 $\pm$ 0.3 & 76.3 $\pm$ 0.2 & 79.2 $\pm$ 0.3 \\  
&  & 20   & 77.0 $\pm$ 0.3 & 77.2 $\pm$ 0.2   & 77.5 $\pm$ 0.3 & 80.2 $\pm$ 0.2 \\
\midrule
\multirow{3}{*}{\textbf{Reddit}} 
& \multirow{3}{*}{\textbf{200}} & 20   & 94.8 $\pm$ 0.2 & 94.7 $\pm$ 0.3 & 95.0 $\pm$ 0.1 & 95.7 $\pm$ 0.1 \\
& & 50   & 95.0 $\pm$ 0.2  & 95.1 $\pm$ 0.3 & 95.7 $\pm$ 0.2 & 96.2 $\pm$ 0.1 \\
&  & 100  & 95.7 $\pm$ 0.1 & 95.6 $\pm$ 0.2   & 96.2 $\pm$ 0.1 & 96.6 $\pm$ 0.2\\
\midrule
\multirow{3}{*}{\textbf{Arxiv}} 
& \multirow{3}{*}{\textbf{40}} & 5   & 69.5 $\pm$ 0.4 & 70.1 $\pm$ 0.3 & 71.5 $\pm$ 0.2 & 72.7 $\pm$ 0.2 \\
&  & 10   & 70.1 $\pm$ 0.3 & 70.5 $\pm$ 0.3 & 71.8 $\pm$ 0.2 & 72.9 $\pm$ 0.2\\
&  & 20   & 71.7 $\pm$ 0.2 & 71.9 $\pm$ 0.2 & 72.2 $\pm$ 0.1 & 73.2 $\pm$ 0.2 \\
\bottomrule
\end{tabular}
}
\vspace{-3mm}
\end{table}

\subsubsection{Staleness resistance}
Previous results demonstrated that our model effectively mitigates the negative impact of stale embeddings. In this section, we further demonstrate our model's effectiveness in mitigating the negative effects of staleness by conducting experiments with varying levels of staleness through different batch sizes on ogbn-arxiv and ogbn-products. As mentioned in Sec.\ref{sec:intro}, when the batch size is small, the staleness becomes significant because there are more parameter updates within an epoch, while the historical embeddings are updated only once. We compare our model with existing representative historical embedding methods: GAS, GraphFM, and LMC. Note that the original Refresh method uses neighbor sampling. To ensure consistency of sampling across all models, we have excluded Refresh from this ablation study for a fair comparison. The results are presented in Table~\ref{tab:BS}. In these experiments, we strictly adhere to the settings outlined in their papers and official repositories. The graphs undergo pre-clustering using METIS~\citep{fey2021gnnautoscale}, with the total number of clusters detailed in Table under the label “Clusters.” The term "BS" refers to the number of clusters in the current mini-batch.

We observe that the performance of all baselines significantly drops as staleness increases. Specifically, we observe that GAS performs worse under high staleness, since it directly uses historical embeddings to approximate the true embeddings without any compensation. In contrast, our algorithms maintain strong performance across all cases, significantly outperforming all baselines, particularly in scenarios with large datasets and small batch sizes where staleness is prominent. For instance, with a batch size of 5, \textbf{we achieve a 2.6\% improvement and a 3.2\% improvement over GAS on the ogbn-products and ogbn-arxiv datasets, respectively.} This demonstrates the strong staleness resistance of our model, as all three proposed strategies help make the training process staleness-aware, effectively mitigating the negative impact of stale embeddings on model performance.

\begin{table}
    \centering 
    \renewcommand{\arraystretch}{1.2}
    \caption{Impact of hyperparameter $\lambda$}
    \vspace{-3mm}
    \label{tab:lambda} 
    \resizebox{0.75\linewidth}{!}{
        \begin{tabular}{l|cc}
        \toprule
        \textbf{Method} & \textbf{ogbn-arxiv} & \textbf{ogbn-products}\\
        \midrule
        $\lambda$ = 0 & 72.6$\pm$0.3 & 79.2$\pm$0.1 \\
        $\lambda$ = 0.3 & \textbf{73.2$\pm$0.2} & 79.6$\pm$0.1 \\
        $\lambda$ = 0.5 & 73.1$\pm$0.3 & \textbf{80.2$\pm$0.2} \\
        $\lambda$ = 0.8 & 72.3$\pm$0.3 & 78.8$\pm$0.2 \\
        \bottomrule
        \end{tabular}
    }
    \vspace{-3mm}
\end{table}

\subsubsection{Staleness-weighting $\lambda$}
In this section, we analyze the impact of $\lambda$, as it directly controls the extent to which staleness influences the loss and total gradient computation. We conduct experiments on ogbn-arxiv and ogbn-products with varying $\lambda$ values. From the results in Table~\ref{tab:lambda}, we observe that introducing the staleness loss with a moderate $\lambda$ (e.g., $\lambda=0.5$) achieves strong and robust performance across datasets. However, assigning too large a weight to the staleness loss can degrade performance, as it causes the model to overemphasize mitigating staleness at the expense of the primary learning task.

\begin{table}
    \centering 
    \caption{Performance Comparison with Additional Baselines.}
    \vspace{-3mm}
    \label{tab:performance_minor} 
    \setlength{\tabcolsep}{2pt}
    \resizebox{0.7\linewidth}{!}{
        \begin{tabular}{l|cc} 
        \toprule
        \textbf{Models} & \textsc{ogbn-arxiv} & \textsc{ogbn-products} \\
        \midrule
        SANCUS & 66.6$\pm$0.3 & 78.8$\pm$0.3 \\
        S3 & 72.5$\pm$0.3 & 76.9$\pm$0.3 \\
        SAT & 71.7$\pm$0.2 & 79.1$\pm$0.3 \\
        VISAGNN & \textbf{73.2$\pm$0.2} & \textbf{80.2$\pm$0.2} \\
        \bottomrule
        \end{tabular}
    }
    \vspace{-3mm}
\end{table}

\subsubsection{Additional Baselines}
In addition to the most relevant and widely cited baselines presented in Table~\ref{tab:baseline}, we further compare our approach with several recent methods~\cite{wang2024stalenessbased,peng2022sancus,bai2023staleness} that address the staleness problem from other perspectives, such as distributed learning, which differs from both our approach and the major baselines used in Table \ref{tab:baseline}. As shown in Table \ref{tab:performance_minor}, VISAGNN outperforms all state-of-the-art baselines. Note that our model is optimized from an algorithmic perspective and is general enough to be seamlessly integrated with these baselines.

\section{Conclusion}
Historical embedding methods have emerged as a promising solution for training GNNs on large-scale graphs by solving the neighbor explosion problem while maintaining model effectiveness. However, staleness has become a major limitation of these methods. In this work, we first present a theoretical analysis of this issue and then introduce VISAGNN, a versatile GNN framework that dynamically incorporates staleness criteria into the training process through three key designs. Experimental results show significant improvements over traditional historical embedding methods, particularly in scenarios with pronounced staleness, while accelerating model convergence and preserving good memory efficiency. It provides a flexible and efficient solution for large-scale GNN training.

\newpage
\bibliographystyle{ACM-Reference-Format}
\bibliography{ref_2026}

\newpage
\appendix
\newpage

\section{Proof of Theorem 1}
\label{app:proof1}

\begin{proof}  
Suppose $\Tilde{g}_\theta^{(l)}$ is a historical embedding-based GNN with L-layers, then the whole GNN model can be defined as $ \Tilde{h}^{(L)} = \Tilde{g}_\theta^{(L)}\circ\Tilde{g}_\theta^{(L-1)}\circ \dots \circ\Tilde{g}_\theta^{(1)}$, similarly, the full batch GNN can be defined as:
$h^{(L)} = g_\theta^{(L)}\circ g_\theta^{(L-1)}\circ \dots \circ g_\theta^{(1)}$, then:

\begin{equation}
||\Tilde{h}^{(L)} - h^{(L)}|| = ||\Tilde{g}_\theta^{(L)}\circ\Tilde{g}_\theta^{(L-1)}\circ \dots \circ\Tilde{g}_\theta^{(1)} - g_\theta^{(L)}\circ g_\theta^{(L-1)}\circ \dots \circ g_\theta^{(1)}||
\end{equation}
\begin{equation}
=||\Tilde{g}_\theta^{(L)}\circ\Tilde{g}_\theta^{(L-1)}\circ \dots \circ\Tilde{g}_\theta^{(1)} - \Tilde{g}_\theta^{(L)}\circ\Tilde{g}_\theta^{(L-1)}\circ \dots \circ g_\theta^{(1)} \nonumber
\end{equation}
\begin{equation}
+\Tilde{g}_\theta^{(L)}\circ\Tilde{g}_\theta^{(L-1)}\circ \dots  \circ\Tilde{g}_\theta^{(2)} \circ g_\theta^{(1)} - \Tilde{g}_\theta^{(L)}\circ\Tilde{g}_\theta^{(L-1)}\circ \dots \circ g_\theta^{(2)} \circ g_\theta^{(1)} - \dots \nonumber
\end{equation}
\begin{equation}
+ \Tilde{g}_\theta^{(L)}\circ g_\theta^{(L-1)}\circ \dots \circ g_\theta^{(1)} - g_\theta^{(L)}\circ g_\theta^{(L-1)}\circ \dots \circ g_\theta^{(1)}||
\end{equation}
\begin{equation}
\leq ||\Tilde{g}_\theta^{(L)}\circ\Tilde{g}_\theta^{(L-1)}\circ \dots \circ\Tilde{g}_\theta^{(1)} - \Tilde{g}_\theta^{(L)}\circ\Tilde{g}_\theta^{(L-1)}\circ \dots \circ g_\theta^{(1)}|| + 
\dots  \nonumber
\end{equation}
\begin{equation}
+ ||\Tilde{g}_\theta^{(L)}\circ g_\theta^{(L-1)}\circ \dots \circ g_\theta^{(1)} - g_\theta^{(L)}\circ g_\theta^{(L-1)}\circ \dots \circ g_\theta^{(1)}||\\
\end{equation}
\begin{equation}
=\sum_{k=1}^{L}\left(\prod_{l=k+1}^{L}\beta^{(l)}||\Tilde{g}_\theta^{(k)}\circ g_\theta^{(k-1)}\circ \dots \circ g_\theta^{(1)} - g_\theta^{(k)}\circ g_\theta^{(k-1)}\circ \dots \circ g_\theta^{(1)}||\right)
\end{equation}
\begin{equation}
=\sum_{k=1}^{L}\left(\prod_{l=k+1}^{L}\beta^{(l)}||g_{\theta}^{(k)}\left(h^{(k-1)}_{i}, \Bar{h}^{(k-1)}\right) - g_{\theta}^{(k)}\left(h^{(k-1)}_{i}, h^{(k-1)}\right)||\right)
\end{equation}
\begin{equation}
\leq \sum_{k=1}^{L}\left(\prod_{l=k+1}^{L}\beta^{(l)}||\sum_{\cN(i)}\Tilde{\hat{A}}_{i,} * \Bar{h}^{(k-1)} - \sum_{ \cN(i)}\Tilde{\hat{A}}_{i,} * h^{(k-1)}||\right)
\end{equation}
\begin{equation}
\leq \sum_{k=1}^{L}\left(\prod_{l=k+1}^{L}\beta^{(l)} |\cN(i)|*||\Tilde{\hat{A}}_{i,} * \Bar{h}^{(k-1)} - \Tilde{\hat{A}}_{i,} * h^{(k-1)}||\right)
\end{equation}
\begin{equation}
\leq \sum_{k=1}^{L}\left(\prod_{l=k+1}^{L}\beta^{(l)}  |\cN(i)|*||\Tilde{\hat{A}}_{i,}||*||\Bar{h}^{(k-1)}-h^{(k-1)}||\right)
\end{equation}
\end{proof}

\section{Proof of Theorem 2}
\label{app:proof2}

\begin{proof}
First verify Lipschitz continuity of the aggregation operator. For any two sets of embeddings $\{h_j\}$ and $\{h'_j\}$:
\begin{equation}
\|\tilde{h}_i^{(l)} - \tilde{h}_i'^{(l)}\| = \left\|\phi\left(\sum_{j} \alpha_{ij}^{(l)} W^{(l)} h_j^{(l-1)}\right) - \phi\left(\sum_{j} \alpha_{ij}'^{(l)} W^{(l)} h_j'^{(l-1)}\right)\right\|
\end{equation}

Using Lipschitz property of $\phi$ with constant $L_\phi$:
\begin{equation}
\leq L_\phi \left\|\sum_{j} \alpha_{ij}^{(l)} W^{(l)} h_j^{(l-1)} - \sum_{j} \alpha_{ij}'^{(l)} W^{(l)} h_j'^{(l-1)}\right\|
\end{equation}

Add and subtract intermediate term:
\begin{equation}
= L_\phi \left\|\sum_{j} \alpha_{ij}^{(l)} W^{(l)} (h_j^{(l-1)} - h_j'^{(l-1)}) + \sum_{j} (\alpha_{ij}^{(l)} - \alpha_{ij}'^{(l)}) W^{(l)} h_j'^{(l-1)}\right\|
\end{equation}
\begin{equation}
\leq L_\phi \|W^{(l)}\| \left(\underbrace{\sum_{j} \alpha_{ij}^{(l)} \|h_j^{(l-1)} - h_j'^{(l-1)}\|}_{\text{Term 1: fixed weights}} + \underbrace{\sum_{j} |\alpha_{ij}^{(l)} - \alpha_{ij}'^{(l)}| \|h_j'^{(l-1)}\|}_{\text{Term 2: weight changes}}\right) 
\end{equation}

\textbf{Bounding Term 1:} Since $\sum_j \alpha_{ij}^{(l)} = 1$ and $\alpha_{ij}^{(l)} \geq 0$:
\begin{equation}
\sum_{j} \alpha_{ij}^{(l)} \|h_j^{(l-1)} - h_j'^{(l-1)}\| \leq \max_{j \in \mathcal{N}(i)} \|h_j^{(l-1)} - h_j'^{(l-1)}\|
\end{equation}

\textbf{Bounding Term 2:}Recall:
\begin{equation}
\alpha_{ij}^{(l)} = \frac{\exp(f_{ij} - \gamma(t) s_j \sigma(c_j - c_{avg}))}{\sum_{k \in \mathcal{N}(i)} \exp(f_{ik} - \gamma(t) s_k \sigma(c_k - c_{avg}))}
\end{equation}
where $f_{ij} = \text{LeakyReLU}(a^T[Wh_i \| W\bar{h}_j])$.

\begin{lemma}[Staleness-Aware Attention Lipschitz Constant]
Suppose $\gamma_{\max} = \sup_t \gamma(t) = \gamma(0)$. The staleness-aware attention mechanism has Lipschitz constant $L_\alpha$ such that:
\begin{equation}
\|\alpha(h) - \alpha(h')\|_1 \leq L_\alpha \max_{j} \|h_j - h'_j\|
\end{equation}
where $L_\alpha = 2\|a\|\|W\|(1 + \gamma_{\max}\sigma_{\max})$ with $\sigma_{\max} = \max_{j} |\sigma(c_j - c_{avg})|$.
\end{lemma}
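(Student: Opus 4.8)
The plan is to bound the $\ell_1$-perturbation of the softmax-type attention weights $\alpha_{ij}$ in terms of the perturbation of the pre-softmax logits, and then bound the logit perturbation in terms of $\max_j\|h_j-h'_j\|$. Write the logit as $z_{ij} = f_{ij} - \gamma(t)\,s_j\,\sigma(c_j-c_{avg})$, where only $f_{ij}=\text{LeakyReLU}(a^T[Wh_i\|W\bar h_j])$ depends on the embeddings; the staleness term $\gamma(t)s_j\sigma(c_j-c_{avg})$ is treated as a fixed (embedding-independent) shift once $s_j,c_j$ are cached. First I would invoke the standard fact that the softmax map is $1$-Lipschitz from $\ell_\infty$ logits to $\ell_1$ outputs (more precisely $\|\text{softmax}(z)-\text{softmax}(z')\|_1 \le 2\|z-z'\|_\infty$, the factor $2$ coming from the total-variation bound), so that
\begin{equation}
\|\alpha(h)-\alpha(h')\|_1 \le 2\max_{j\in\mathcal N(i)} |z_{ij}(h) - z_{ij}(h')|.
\end{equation}

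Next I would control $|z_{ij}(h)-z_{ij}(h')|$. Since the staleness shift cancels, $|z_{ij}(h)-z_{ij}(h')| = |f_{ij}(h)-f_{ij}(h')|$. Using that $\text{LeakyReLU}$ is $1$-Lipschitz and $a^T[\,\cdot\|\cdot\,]$ followed by the block-linear map $h\mapsto[Wh_i\|W\bar h_j]$ has operator norm at most $\|a\|\,\|W\|$ on each block, I get $|f_{ij}(h)-f_{ij}(h')| \le \|a\|\|W\|(\|h_i-h'_i\| + \|\bar h_j - \bar h'_j\|) \le 2\|a\|\|W\|\max_j\|h_j-h'_j\|$. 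Combining with the softmax bound yields $\|\alpha(h)-\alpha(h')\|_1 \le 4\|a\|\|W\|\max_j\|h_j-h'_j\|$, which is already of the claimed form; to recover the stated constant $L_\alpha = 2\|a\|\|W\|(1+\gamma_{\max}\sigma_{\max})$ I would instead account for the case where the staleness criterion $s_j$ is itself a (Lipschitz) function of the embeddings — e.g. a cached gradient norm that moves with $h$ — contributing an extra additive term bounded by $\gamma_{\max}\sigma_{\max}$ times the embedding perturbation, and use the sharper one-sided softmax estimate $\|\text{softmax}(z)-\text{softmax}(z')\|_1\le\|z-z'\|_\infty$ available under the normalization used here, collapsing the constant to $2\|a\|\|W\|(1+\gamma_{\max}\sigma_{\max})$.

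The main obstacle is the bookkeeping around the staleness term: in the forward-pass formula $s_j$ is a cached scalar and should be treated as constant, but the lemma's constant explicitly contains $\gamma_{\max}\sigma_{\max}$, so the proof must commit to the interpretation that $s_j$ varies with the embeddings (within a Lipschitz envelope) and carry that dependence through both the numerator and the normalizing denominator of the softmax. A secondary subtlety is the exact softmax-Lipschitz constant: the clean factor-of-$2$ total-variation bound is the safe choice, and one then has to justify absorbing it into $\|a\|\|W\|$ vs. keeping it explicit so that the final constant matches $2\|a\|\|W\|(1+\gamma_{\max}\sigma_{\max})$ rather than $4\|a\|\|W\|(1+\gamma_{\max}\sigma_{\max})$. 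I would handle this by stating the softmax bound as a cited lemma, being explicit about the norm on the logits ($\ell_\infty$ over $\mathcal N(i)$), and noting that $\sigma\le 1$ for the sigmoid keeps $\sigma_{\max}\le 1$, so the constant is finite and the Lipschitz claim holds uniformly over $t$ via $\gamma_{\max}=\gamma(0)$.
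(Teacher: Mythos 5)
Your route is sound but genuinely different from the paper's. The paper argues by differentiation: it writes the softmax Jacobian $\partial\alpha_{ij}/\partial f_{ik}=\alpha_{ij}(\delta_{jk}-\alpha_{ik})$, bounds the logit gradient by $\|a\|\|W\|(1+\gamma_{\max}\sigma_{\max})$, and obtains the factor $2$ by summing $|\alpha_{ij}(\delta_{jk}-\alpha_{ik})|$ over the neighborhood; you instead integrate a perturbation bound, composing an $\ell_\infty\!\to\!\ell_1$ Lipschitz estimate for softmax with a Lipschitz bound on the logits, so your factor $2$ comes from the two blocks $h_i$ and $\bar h_j$ of the concatenation rather than from the softmax. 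Two remarks. First, the ``sharper one-sided'' softmax estimate you lean on is in fact true and worth proving rather than hedging: for $J=\mathrm{diag}(\alpha)-\alpha\alpha^{T}$ and $v\in\{-1,1\}^{n}$ one has $\|Jv\|_1=4p_{+}p_{-}\le 1$ with $p_{+}=\sum_{v_j=1}\alpha_j$, so $\|\mathrm{softmax}(z)-\mathrm{softmax}(z')\|_1\le\|z-z'\|_\infty$; with that, your clean cached-$s_j$ analysis already yields constant $2\|a\|\|W\|\le 2\|a\|\|W\|(1+\gamma_{\max}\sigma_{\max})$, which establishes the lemma a fortiori without any pivot. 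Second, the pivot you describe to ``recover'' the stated constant is where both you and the paper are loose: the paper's $(1+\gamma_{\max}\sigma_{\max})$ factor only appears because it implicitly lets $s_j$ vary with the embeddings with an unstated Lipschitz bound of $\|a\|\|W\|$, and your phrasing (``an extra additive term bounded by $\gamma_{\max}\sigma_{\max}$ times the embedding perturbation'') would give $2\|a\|\|W\|+\gamma_{\max}\sigma_{\max}$ rather than the product form unless you assume exactly that same $\|a\|\|W\|$-Lipschitz envelope for $s_j$. You correctly identify this as the main bookkeeping obstacle — the paper simply asserts it — so the honest resolutions are either to state that assumption explicitly or to keep $s_j$ cached and prove the (stronger) bound with constant $2\|a\|\|W\|$.
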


\begin{proof}[Proof of Lemma]
For the softmax function with additional staleness terms, the gradient satisfies:
\begin{equation}
\frac{\partial \alpha_{ij}}{\partial f_{ik}} = \alpha_{ij}(\delta_{jk} - \alpha_{ik})
\end{equation}

The gradient of the attention score with respect to embeddings is:
\begin{equation}
\left\|\frac{\partial f_{ij}}{\partial h_k}\right\| \leq \|a\|\|W\|
\end{equation}

Including the staleness penalty term:
\begin{equation}
\left\|\frac{\partial}{\partial h_k}[f_{ij} - \gamma(t) s_j \sigma(c_j - c_{avg})]\right\| \leq \|a\|\|W\|(1 + \gamma_{\max}\sigma_{\max})
\end{equation}

By the chain rule and properties of softmax:
\begin{equation}
\left\|\frac{\partial \alpha_{ij}}{\partial h_k}\right\| \leq \|a\|\|W\|(1 + \gamma_{\max}\sigma_{\max})|\alpha_{ij}(\delta_{jk} - \alpha_{ik})|
\end{equation}

Summing over all $i,j$ and using $\sum_j \alpha_{ij} = 1$:
\begin{equation}
\sum_{i,j} \left\|\frac{\partial \alpha_{ij}}{\partial h_k}\right\| \leq 2\|a\|\|W\|(1 + \gamma_{\max}\sigma_{\max})
\end{equation}

This gives the bound $L_\alpha = 2\|a\|\|W\|(1 + \gamma_{\max}\sigma_{\max})$.
\end{proof}

Therefore, Term 2 is bounded by:
\begin{equation}
\sum_{j} |\alpha_{ij}^{(l)} - \alpha_{ij}'^{(l)}| \|h_j'^{(l-1)}\| \leq L_\alpha H_{\max} \max_{j} \|h_j^{(l-1)} - h_j'^{(l-1)}\|
\end{equation}
where $H_{\max} = \max_{j} \|h_j\|$ is the maximum embedding norm.

\textbf{Final Lipschitz constant:} Combining both terms:
\begin{equation}
\|\tilde{h}_i^{(l)} - \tilde{h}_i'^{(l)}\| \leq \beta^{(l)} \max_{j \in \mathcal{N}(i)} \|h_j^{(l-1)} - h_j'^{(l-1)}\|
\end{equation}
where 
\begin{equation}
\beta^{(l)} = L_\phi \|W^{(l)}\| (1 + L_\alpha H_{\max}), \text{with} L_\alpha = 2\|a\|\|W\|(1 + \gamma_{\max}\sigma_{\max}).
\end{equation}

For the composite loss function $L(\theta) = L_{\text{task}}(\theta) + \lambda L_{\text{stale}}(\theta)$:

The task loss $L_{\text{task}}(\theta) = \ell(f_\theta(X, A))$ satisfies:
\begin{equation}
\|\nabla L_{\text{task}}(\theta) - \nabla L_{\text{task}}(\theta')\| \leq L_{\text{task}} \prod_{l=1}^L \beta^{(l)} \|\theta - \theta'\|
\end{equation}

The staleness regularization term $L_{\text{stale}} = \sum_{i \in \mathcal{B}} \|h_i^{(L)}(\theta_t) - h_i^{(L)}(\theta_{t-1})\|^2$ has Lipschitz constant 2 with respect to $h_i^{(L)}(\theta_t)$.

Therefore, the composite loss is L-smooth with:
\begin{equation}
L = \left(L_{\text{task}} + 2\lambda\right) \prod_{l=1}^L \beta^{(l)}
\end{equation}

Suppose $g_t$ is the stochastic gradient with $\mathbb{E}[g_t] = \nabla L(\theta_t)$ and $\mathbb{E}[\|g_t - \nabla L(\theta_t)\|^2] \leq \sigma^2$.

Using L-smoothness:
\begin{equation}
L(\theta_{t+1}) \leq L(\theta_t) + \langle \nabla L(\theta_t), \theta_{t+1} - \theta_t \rangle + \frac{L}{2}\|\theta_{t+1} - \theta_t\|^2
\end{equation}

Substituting $\theta_{t+1} - \theta_t = -\eta g_t$ and taking expectation over $g_t$ (conditioned on $\theta_t$):
\begin{equation}
\mathbb{E}[L(\theta_{t+1}) \mid \theta_t] \leq L(\theta_t) - \eta \langle \nabla L(\theta_t), \mathbb{E}[g_t \mid \theta_t] \rangle + \frac{\eta^2 L}{2}\mathbb{E}[\|g_t\|^2 \mid \theta_t]
\end{equation}

\begin{equation}
\mathbb{E}[L(\theta_{t+1}) \mid \theta_t] \leq L(\theta_t) - \eta \|\nabla L(\theta_t)\|^2 + \frac{\eta^2 L}{2}(\sigma^2 + \|\nabla L(\theta_t)\|^2)
\end{equation}

\begin{equation}
\mathbb{E}[L(\theta_{t+1}) \mid \theta_t] \leq L(\theta_t) - \eta\left(1 - \frac{\eta L}{2}\right) \|\nabla L(\theta_t)\|^2 + \frac{\eta^2 L \sigma^2}{2}
\end{equation}

Applying the law of total expectation and conditioning on $\theta_t$:
\begin{equation}
\mathbb{E}[L(\theta_{t+1})] \leq \mathbb{E}[L(\theta_t)] - \eta\left(1 - \frac{\eta L}{2}\right) \mathbb{E}[\|\nabla L(\theta_t)\|^2] + \frac{\eta^2 L \sigma^2}{2}
\end{equation}

For $\eta < \frac{2}{L}$, we have $1 - \frac{\eta L}{2} > 0$. Rearranging:

\begin{equation}
\mathbb{E}[\|\nabla L(\theta_t)\|^2] \leq \frac{2}{\eta(2 - \eta L)}(\mathbb{E}[L(\theta_t)] - \mathbb{E}[L(\theta_{t+1})]) + \frac{\eta L \sigma^2}{2 - \eta L}
\end{equation}

Summing over $t = 0, 1, \ldots, T-1$:
\begin{equation}
\sum_{t=0}^{T-1} \mathbb{E}[\|\nabla L(\theta_t)\|^2] \leq \frac{2}{\eta(2 - \eta L)}\sum_{t=0}^{T-1}(\mathbb{E}[L(\theta_t)] - \mathbb{E}[L(\theta_{t+1})]) + \frac{T \eta L \sigma^2}{2 - \eta L}
\end{equation}

Using telescoping property:
\begin{equation}
\sum_{t=0}^{T-1}(\mathbb{E}[L(\theta_t)] - \mathbb{E}[L(\theta_{t+1})]) = \mathbb{E}[L(\theta_0)] - \mathbb{E}[L(\theta_T)] \leq L(\theta_0) - L^*
\end{equation}

Dividing by $T$:
\begin{equation}
\frac{1}{T}\sum_{t=0}^{T-1} \mathbb{E}[\|\nabla L(\theta_t)\|^2] \leq \frac{2(L(\theta_0) - L^*)}{\eta T(2 - \eta L)} + \frac{\eta L \sigma^2}{2 - \eta L}
\end{equation}

This completes the proof with the correct handling of expectations and proper use of the bounded variance assumption.

\end{proof}

\end{document}